\documentclass{article}

\usepackage[main, final]{neurips_2026}

\usepackage[utf8]{inputenc} 
\usepackage[T1]{fontenc}    
\usepackage{hyperref}       
\usepackage{url}            
\usepackage{booktabs}       
\usepackage{amsfonts}       
\usepackage{nicefrac}       
\usepackage{microtype}      
\usepackage{xcolor}         
\usepackage{colortbl}
\usepackage{amsmath}
\usepackage{multirow}
\usepackage{graphicx}

\newtheorem{proposition}{Proposition}
\newtheorem{lemma}{Lemma}
\newtheorem{proof}{Proof}
\newtheorem{corollary}{Corollary}
\definecolor{pink}{rgb}{1, 0.8, 0.8}
\definecolor{orange}{rgb}{1, 0.9, 0.8}

\title{BudSplat: Feed-forward 3D Gaussian Splatting under a Rendering Budget}

\author{
  Muyu Xu \\
  Nanyang Technological University\\
  \texttt{muyu001@e.ntu.edu.sg} \\
  \And
  Fangneng Zhan\\
  Hong Kong University of Science and Technology \\
  \texttt{fnzhan@ust.hk} \\
  \AND
  Yu Wei\\
  Nanyang Technological University\\
  \texttt{ywei012@e.ntu.edu.sg} \\
  \And
  Ling Shao \\
  UCAS-Terminus AI Lab\\
  \texttt{ling.shao@ieee.org} \\
  \And
  Shijian Lu \\
  Nanyang Technological University\\
  \texttt{Shijian.Lu@ntu.ntu.edu.sg} \\
}

\begin{document}

\maketitle

\begin{abstract}
Feed-forward 3D Gaussian Splatting has recently emerged as a promising alternative to per-scene optimization, enabling fast novel view synthesis by predicting Gaussian primitives directly from input images. However, it quickly faces a bottleneck of rendering efficiency while scaling from sparse to dense multi-view inputs, which offer more geometric constraints but produce a rapidly growing set of candidate Gaussians. Existing solutions cannot reliably distinguish redundant primitives, leading to many redundant primitives from repeatedly observed regions and increased rendering cost without much gain in visual quality. 
We propose BudSplat, a compact feed-forward 3DGS framework that learns to keep the most valuable Gaussians under a limited primitive count. BudSplat has two novel designs. The first is a detail-aware importance score that is extracted from multi-level visual tokens, providing an explicit cue for regions where pruning is likely to harm perceptual quality. The score can be injected into voxel fusion to preserve high-value local contributors during multi-view aggregation. The second is quality-aware Gaussian pruning that selects a compact set of primitives by jointly considering rendering contribution and detail importance. The quality-aware pruning enables local quota redistribution that helps avoid over-allocating primitives to large redundant regions. Experiments on RE10K, DL3DV, and ACID show that BudSplat achieves a favorable trade-off among rendering quality and efficiency.
\end{abstract}

\section{Introduction}

Novel view synthesis aims to recover a scene representation from a set of input images and render photorealistic images from unseen viewpoints. 3D Gaussian Splatting (3DGS) has become a highly effective representation for this task due to its explicit Gaussian primitives and efficient splatting-based renderer, offering an attractive balance between reconstruction quality and rendering speed~\citep{kerbl20233d}. 
However, 3DGS relies heavily on per-scene optimization that necessitates a separate reconstruction and optimization process for each new scene, limiting its applicability to large-scale data processing, interactive applications, and practical deployment.

Feed-forward 3DGS, which predicts Gaussian primitives directly from input images in a single network pass, has recently been investigated to amortize the cost of per-scene optimization.  Different feed-forward designs have been explored, including pixel-aligned Gaussian prediction, cost-volume-based geometry reasoning, transformer-based large reconstruction models, voxel-aligned prediction, graph-based Gaussian aggregation, and unposed feed-forward reconstruction~\citep{charatan2024pixelsplat,chen2024mvsplat,zhang2024gs,ziwen2025long,wang2025volsplat,zhang2024gaussian,jiang2025anysplat}. More general feed-forward world models also begin to support 3D Gaussians as one of their output representations~\citep{liu2025worldmirror}. 
Along the above development, feed-forward 3DGS has also been moving from sparse and constrained inputs toward larger numbers of views, broader scene coverage, and weaker assumptions on camera poses.


This trend exposes an underexplored bottleneck – as the number of input views increases, the main challenge gradually shifts from reconstruction ambiguity to rendering efficiency. Denser input views provide stronger geometric constraints and better scene coverage, but these benefits come at a cost of a much larger number of candidate primitives. Typical feed-forward pipelines generate Gaussians densely from pixels, image tokens, or volumetric features, and then suppress the generated Gaussians via voxelization, pooling, or other aggregation mechanisms. Although the suppression removes some redundant primitives, the final Gaussian set can still grow rapidly with the increase of input views. Since the cost of splatting is closely related to the number of visible Gaussians, such growth increases rendering pressure and weakens one of the key advantages of 3DGS as an explicit representation.


A straightforward solution is to prune the predicted Gaussians, but simple pruning based on opacity or confidence is suboptimal as different Gaussians carry different rendering values. In multi-view reconstruction, regions that are observed many times often have high confidence, but they tend to produce highly redundant Gaussian primitives. In contrast, thin structures, occlusion boundaries, and high-frequency textures usually occupy only a small spatial extent, yet their reconstruction strongly affects perceptual quality of rendered images. If pruning is simply driven by opacity, confidence, or a global top-$k$ rule, the retained Gaussian set can be easily dominated by large, stable, and visually simple regions while those fine but important details are either averaged out during fusion or removed during pruning. This suggests that compact feed-forward 3DGS requires not only fewer Gaussians, but also reliable criteria for deciding which Gaussians are worth keeping.



We address this issue from the perspective of rendering-budget-constrained representation learning. Instead of retaining as many predicted primitives as possible, the model should preserve the subset of Gaussians that contributes most to rendering quality under a bounded primitive count. To this end, we introduce BudSplat, a compact feed-forward 3DGS framework that propagates detail-aware importance through Gaussian generation, fusion, and pruning.


BudSplat positions compactness as one part of feed-forward prediction rather than a separate post-processing step. It identifies local regions where Gaussian removal is likely to hurt perceptual quality, such as boundaries, thin geometry, and high-frequency texture. This enables importance-preserving voxel fusion that prevents valuable local contributors from being diluted by redundant observations in smooth regions, as well as quality-aware Gaussian pruning that allows selecting a compact set of primitives for rendering. On top of that, BudSplat introduces a local quota redistribution strategy that further prevents large smooth regions from monopolizing the retained primitives, ensuring sufficient representation of detail-rich regions even under a small Gaussian budget. Hence, BudSplat redefines the role of pruning in feed-forward 3DGS: pruning is not generic Gaussian reduction after dense generation but a learned allocation mechanism that decides where the limited rendered primitives should be spent. It strikes a good balance among visual fidelity, representation compactness, and rendering efficiency as the number of input views increases.



Our main contributions are summarized as follows:
\begin{itemize}
    \item We identify the primitive growth problem in feed-forward 3DGS and study Gaussian prediction from the perspective of rendering-budget-constrained representation learning.

    \item We propose a detail-aware compact 3DGS framework that integrates residual token-based importance estimation, importance-preserving voxel fusion, and quality-aware Gaussian pruning with local quota redistribution, enabling the model to retain primitives that are more valuable for rendering.

    \item Experiments on RealEstate10K, DL3DV, and ACID demonstrate that our method achieves a better trade-off between visual fidelity and rendering efficiency under challenging low-overlap evaluation settings.
\end{itemize}
\section{Related Work}

\paragraph{Feed-forward and generalizable 3D Gaussian Splatting.}
3D Gaussian Splatting (3DGS) represents scenes with explicit anisotropic Gaussian primitives and enables efficient differentiable rendering, but it still requires per-scene optimization~\citep{kerbl20233d}. Feed-forward 3DGS amortizes this cost by predicting Gaussians directly from input images. Early methods such as pixelSplat and MVSplat generate pixel-aligned Gaussians from sparse posed views, with MVSplat further using cost-volume reasoning~\citep{charatan2024pixelsplat,chen2024mvsplat}. Later works improve geometry, hierarchy, or alignment with depth priors, hierarchical Gaussians, and voxel-aligned prediction~\citep{xu2025depthsplat,liu2025monosplat,tang2024hisplat,wang2025volsplat}. Other methods scale the paradigm through transformer-based reconstruction, long-sequence modeling, iterative refinement, whole-scene reconstruction, or unposed inputs~\citep{zhang2024gs,ziwen2025long,kang2025ilrm,wang2024freesplat,wang2025freesplat++,ye2024no,xu2025freesplatter,jiang2025anysplat}. WorldMirror further studies general feed-forward world reconstruction with 3D Gaussian outputs~\citep{liu2025worldmirror}. Despite these advances, most methods still rely on dense per-view, pixel-aligned, or voxel-aligned primitive prediction, leaving the growth of rendered Gaussians under large-view inputs insufficiently addressed.

\paragraph{Compactness, fusion, and redundancy reduction.}
Many works compress optimized 3DGS scenes after reconstruction by pruning low-significance Gaussians, learning masks, reducing per-point storage, modeling inter-primitive redundancy, or combining pruning with quantization and distillation~\citep{fan2024lightgaussian,girish2024eagles,lee2024compact,liu2024compgs,ali2024trimming}. These methods demonstrate substantial redundancy in 3DGS, but mainly operate as post-reconstruction compression. In feed-forward 3DGS, FreeSplat, FreeSplat++, GGN, MuSASplat, and AnySplat reduce multi-view redundancy through feature fusion, Gaussian pooling, or differentiable voxelization~\citep{wang2024freesplat,wang2025freesplat++,zhang2024gaussian,xu2026musasplat,jiang2025anysplat}. 
These approaches reduce duplicated observations or improve cross-view aggregation, but they are not formulated as rendering-budget allocation methods and do not explicitly preserve local perceptual details during fusion. 
Our method differs from the two kinds above by treating compactness as rendering-budget allocation during feed-forward prediction, unifying detail-aware token importance, importance-preserving voxel fusion, and Gaussian-level pruning.

\section{Method}

Given a set of multi-view input images, our goal is to predict a compact 3D Gaussian representation that can be rendered efficiently while preserving feed-forward reconstruction quality. We build on an AnySplat-style pipeline, where a visual geometry backbone predicts dense 3D points and image-aligned features, and these dense candidates are fused into Gaussian primitives for rendering. This design is effective for unconstrained reconstruction, but becomes less scalable as the number of input views increases. More views provide stronger geometric constraints, yet also generate many redundant candidates from repeatedly observed smooth regions. In contrast, visually important regions such as thin structures, occlusion boundaries, and high-frequency textures often occupy only a small fraction of the scene.

Our method makes primitive generation and pruning aware of rendering value. We first estimate patch-level detail importance from multi-layer VGGT tokens, then preserve this signal during voxel fusion, and finally select a compact Gaussian set according to a quality-aware ranking criterion. The model therefore does not simply keep the most confident primitives, but allocates the retained Gaussians to regions that are more likely to affect novel-view quality.

Figure~\ref{fig:pipeline} illustrates the overall pipeline. Given multi-view RGB images, the VGGT backbone first predicts dense geometry, confidence cues, and multi-layer visual tokens. The dense geometry and image-aligned features provide an initial pool of Gaussian candidates, while the retained multi-layer tokens are passed to our detail-aware decomposition module. This module reconstructs common low-rank token components with a down-up projection structure, uses the residual as detail cues, and predicts a patch-level detail score with a lightweight detail head. Finally, the fused Gaussians are ranked by a quality-aware pruning criterion that combines opacity, detail importance, and support-based redundancy correction, followed by patch-level quota redistribution. The resulting representation keeps only a sparse set of Gaussians for rendering while preserving visually important local structures.

\begin{figure*}[t]
    \centering
    \includegraphics[width=\textwidth]{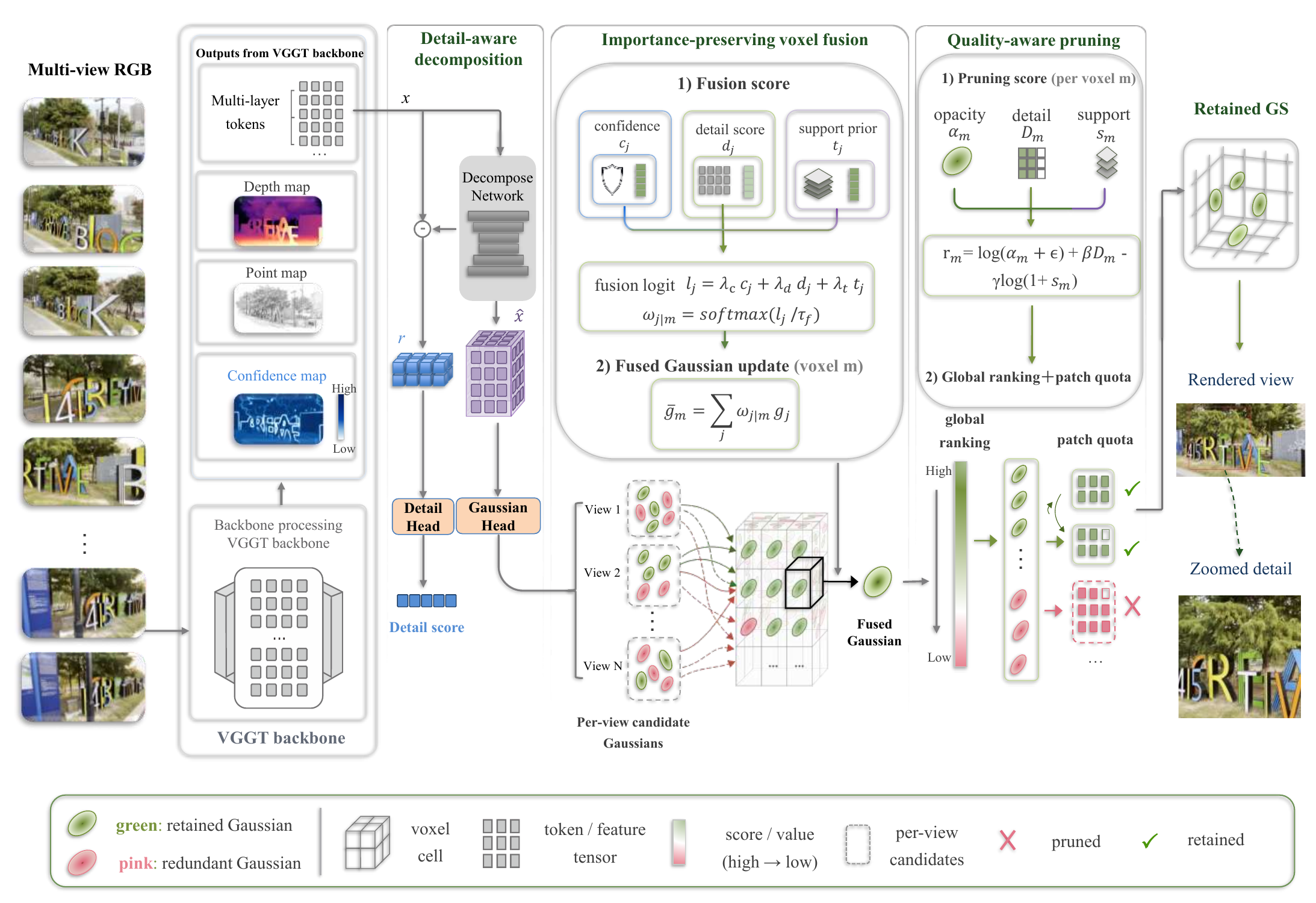}
    \caption{
    Overview of BudSplat. With dense geometry, confidence maps, and multi-layer tokens as predicted from multi-view RGB images by VGGT, the \textit{detail-aware decomposition} first extracts residual token cues and predicts patch-level detail scores. The predicted scores are then propagated to the \textit{importance-preserving voxel fusion} together with confidence and voxel support, retaining important local contributors during aggregation. Finally, the \textit{quality-aware pruning} ranks the fused Gaussians by opacity, detail importance, and support-aware redundancy correction, followed by patch-level quota redistribution to select a compact set of retained Gaussian for rendering.
    }
    \label{fig:pipeline}
\end{figure*}

\subsection{Preliminary}

\paragraph{3D Gaussian Splatting.}
3D Gaussian Splatting represents a scene as a set of anisotropic Gaussian primitives~\citep{kerbl20233d}. Each primitive is parameterized as \(g_i=(\boldsymbol{\mu}_i,\boldsymbol{\Sigma}_i,\alpha_i,\mathbf{c}_i)\), where \(\boldsymbol{\mu}_i\) is the 3D center, \(\boldsymbol{\Sigma}_i\) is the covariance, \(\alpha_i\) is the opacity, and \(\mathbf{c}_i\) is the color feature. The covariance is commonly factorized as \(\boldsymbol{\Sigma}_i=\mathbf{R}_i\operatorname{diag}(\mathbf{s}_i^2)\mathbf{R}_i^\top\), with rotation \(\mathbf{R}_i\) and scale \(\mathbf{s}_i\).

For rendering, each 3D Gaussian is projected to a 2D Gaussian. Let \(\mathbf{u}_i=\pi(\boldsymbol{\mu}_i)\) be the projected center and \(\boldsymbol{\Sigma}'_i\) be the projected covariance. Its contribution to pixel \(\mathbf{p}\) is \(a_i(\mathbf{p})=\alpha_i\exp(-\frac{1}{2}(\mathbf{p}-\mathbf{u}_i)^\top{\boldsymbol{\Sigma}'_i}^{-1}(\mathbf{p}-\mathbf{u}_i))\). After depth sorting, the rendered color is computed by alpha compositing,
\begin{equation}
\mathbf{C}(\mathbf{p})=\sum_{i\in\mathcal{N}(\mathbf{p})}T_i(\mathbf{p})a_i(\mathbf{p})\mathbf{c}_i,
\qquad
T_i(\mathbf{p})=\prod_{j<i}\left(1-a_j(\mathbf{p})\right).
\end{equation}
Although splatting is efficient, its cost remains closely related to the number of visible Gaussians. Controlling the number of rendered primitives is therefore essential for large-view-count feed-forward 3DGS.

\paragraph{VGGT.}
VGGT is a feed-forward visual geometry model that predicts 3D scene attributes from one or multiple images~\citep{wang2025vggt}. Given \(V\) context images \(\mathcal{I}=\{\mathbf{I}_v\}_{v=1}^{V}\), VGGT encodes all views with a transformer and predicts camera parameters, depth maps, point maps, and multi-view features. In our framework, dense VGGT predictions initialize candidate Gaussian positions, while retained transformer tokens provide local importance cues. We denote a retained token as \(\mathbf{x}^{\ell}_{v,p}\in\mathbb{R}^{C}\), where \(v\) indexes the view, \(p\) indexes the patch, and \(\ell\in\mathcal{L}\) indexes a retained layer.

\subsection{Detail-Aware Importance from Visual Tokens}

Dense feed-forward Gaussian prediction treats image locations almost uniformly. This is suitable for scene coverage, but inefficient when the final representation must be compact. Smooth surfaces may be observed many times and produce many confident candidates, while small structures and high-frequency details can be overwhelmed. We therefore estimate a detail-aware importance signal before voxel fusion and Gaussian pruning.

We extract this signal from retained VGGT tokens with a lightweight decomposition module. For each retained VGGT token \(\mathbf{x}^{\ell}_{v,p}\), we estimate detail importance through a low-rank adapter-style reconstruction module. The module follows a simple down-up projection structure: it first normalizes the token, projects it to a low-dimensional rank-\(r\) space, applies a non-linear activation, and projects it back to the original channel dimension. Specifically, the reconstructed common component is 
\begin{equation}
\hat{\mathbf{x}}^{\ell}_{v,p}=\mathbf{W}^{\ell}_{u}\phi(\mathbf{W}^{\ell}_{d}\operatorname{LN}(\mathbf{x}^{\ell}_{v,p})),
\end{equation}
where \(\mathbf{W}^{\ell}_{d}\in\mathbb{R}^{r\times C}\), \(\mathbf{W}^{\ell}_{u}\in\mathbb{R}^{C\times r}\), and \(r\ll C\). The residual \(\mathbf{r}^{\ell}_{v,p}=\mathbf{x}^{\ell}_{v,p}-\hat{\mathbf{x}}^{\ell}_{v,p}\) is then fed to a linear score head to obtain the per-layer detail logit 
\begin{equation}
z^{\ell}_{v,p}=\mathbf{w}_{d}^{\top}\operatorname{LN}(\mathbf{r}^{\ell}_{v,p})+b_d.
\end{equation}
The down-up projection module captures token components that can be explained by a compact low-rank subspace, while the residual branch exposes less easily reconstructed information, which can include boundaries, thin structures, and local high-frequency variations. This design avoids directly scoring the original high-dimensional token, which can be dominated by common low-dimensional components in geometry-aware transformer features. Since different transformer layers encode different levels of abstraction, we fuse detail logits from multiple retained layers using learned layer weights. The final detail logit is \(z_{v,p}=\sum_{\ell\in\mathcal{L}}\omega_{\ell}z^{\ell}_{v,p}\), where \(\omega_{\ell}=\operatorname{softmax}(\gamma)_{\ell}\), and the patch-level detail probability is \(d_{v,p}=\sigma(z_{v,p})\). We upsample this detail map to the pixel lattice and attach it to dense candidate points before voxel fusion, so the importance signal is propagated during aggregation rather than added only after Gaussian prediction.

To relate these residual features to rendering quality, we supervise the detail branch with a hybrid target. Rendering residuals from an ungated source-view reconstruction indicate where the current representation fails, while photometric and geometric variations provide a stable cue for fine structures. After normalizing these cues per view, we pool them to the patch lattice and combine them as \(y_{v,p}=\lambda_{\mathrm{res}}y^{\mathrm{res}}_{v,p}+\lambda_{\mathrm{var}}y^{\mathrm{var}}_{v,p}\). The target is stop-gradient, and the detail loss \(\mathcal{L}_{\mathrm{detail}}=\operatorname{BCEWithLogits}(z_{v,p},y_{v,p})\) encourages the branch to focus on regions that are difficult to reconstruct and rich in local structure. This supervision encourages rendering-relevant predictions from residual features, whose magnitudes alone may also reflect noise or geometry errors. Appendix~\ref{app:details} details the individual cues and their aggregation.

\subsection{Importance-Preserving Voxel Fusion}

The detail signal is useful only if it survives the transition from dense candidates to fused Gaussians. In the baseline pipeline, candidates from multiple views are grouped by spatial voxelization. This removes duplicates, but a confidence-dominated fusion rule can dilute rare local structures. For example, if a voxel contains many smooth-surface candidates and only a few boundary candidates, averaging mainly by confidence may suppress the boundary contribution.

We therefore make voxel fusion detail-aware. Let \(j\) be a dense contributor with 3D location \(\mathbf{p}_j\), feature \(\mathbf{h}_j\), confidence \(c_j\), source view \(v_j\), source patch \(p_j\), and detail prior \(d_j\) sampled from the upsampled token detail map. Contributors assigned to voxel \(m\) form \(\mathcal{V}_m=\{j\mid \operatorname{round}(\mathbf{p}_j/\delta)=m\}\), where \(\delta\) is the voxel size. We also compute a contributor-level support prior \(t_j\) from the number of candidates or source views supporting the same local voxel neighborhood, normalized within each scene. This prior provides a weak cue for multi-view consistency, while the detail score prevents repeatedly observed smooth regions from dominating the fusion.

For each contributor inside voxel \(m\), we compute 
\begin{equation}
    \ell_j=\lambda_c c_j+\lambda_d d_j+\lambda_t t_j
\end{equation}
and normalize it within the voxel as \(w_{j|m}=\operatorname{softmax}_{j\in\mathcal{V}_m}(\ell_j/\tau_f)\). The fused position, feature, and detail score are \(\bar{\mathbf{x}}_m=\sum_{j\in\mathcal{V}_m}w_{j|m}\mathbf{x}_j\), \(\bar{\mathbf{h}}_m=\sum_{j\in\mathcal{V}_m}w_{j|m}\mathbf{h}_j\), and \(\bar{d}_m=\sum_{j\in\mathcal{V}_m}w_{j|m}d_j\). This rule still uses confidence and support, but contributors with high local detail can receive larger weights even when they are not the majority inside a voxel.

After fusion, we record voxel support \(s_m=|\mathcal{V}_m|\), which counts how many dense contributors are merged into voxel \(m\). Different from the contributor-level support prior \(t_j\) used for fusion, \(s_m\) is a voxel-level statistic used later for pruning. A large \(s_m\) may indicate strong multi-view evidence, but it may also indicate redundancy, so we use it only as a mild correction in the pruning score. 

To carry the token cue into pruning, we associate each fused Gaussian with its dominant contributor \(j_m^\star=\arg\max_{j\in\mathcal{V}_m}w_{j|m}\). Its source-view/source-patch pair defines \(\operatorname{tok}(m)\), and we set \(d_{\operatorname{tok}(m)}=d_{j_m^\star}\). The final Gaussian detail score is \(D_m=(1-\eta)d_{\operatorname{tok}(m)}+\eta\bar{d}_m\), with \(\eta=0.5\), combining this dominant token cue with the detail preserved across all contributors during voxel fusion. The same patch identity is used for budget allocation below, so a voxel contributes one fused Gaussian rather than a separate primitive for each source patch.

\subsection{Quality-Aware Gaussian Pruning}

Voxel fusion reduces many duplicates, but it does not fully control rendering cost. With many input views, the fused Gaussian set can still contain many primitives from repeatedly observed regions. Rendering all of them is expensive, while naive global pruning may remove small but important details. We therefore combine a differentiable gate for learning with quality-aware ranking and quota allocation to select Gaussians under a fixed budget.

For each fused Gaussian feature \(\bar{\mathbf{h}}_m\), a lightweight gate head predicts \(a_m=f_{\mathrm{gate}}(\bar{\mathbf{h}}_m)\). The soft gate is \(q_m=f_t+(1-f_t)\sigma(a_m/\tau_g)\), where \(\tau_g\) is a temperature and \(f_t\) is a gate floor. The gate modulates opacity as \(\alpha'_m=\alpha_mq_m\), providing a differentiable training path. The gate floor prevents early collapse, and the temperature controls how sharp the selection becomes. During training, we gradually decrease the floor and anneal the keep ratio, allowing the model to first learn stable reconstruction and then adapt to stricter budgets.

For hard pruning, each fused Gaussian is ranked by
\begin{equation}
r_m=\log(\alpha_m+\epsilon)+\beta D_m-\gamma\log(1+s_m).
\label{eq:ranking}
\end{equation}
The opacity term favors primitives with strong rendering contribution, the detail term favors perceptually important local structures, and the support term mildly penalizes heavily populated voxels. This logarithmic correction does not mean that high-support Gaussians are unimportant. Instead, it prevents support alone from allowing redundant regions to monopolize the retained set. A high-support Gaussian with strong opacity or detail can still be selected.

Given a keep ratio \(\rho\) and \(M\) fused Gaussians, the target number of retained primitives is \(K=\lceil\rho M\rceil\). A global top-\(K\) rule can still favor large smooth regions because they produce many confident candidates. To reduce this bias, we redistribute the budget at the patch level using the identity inherited from each Gaussian's dominant contributor. These fixed VGGT token cells serve as allocation bins for already-fused Gaussians: cross-view merging is handled by the preceding 3D voxel fusion. We first assign each patch a base quota proportional to its candidate count, and then distribute the remaining slots according to patch detail strength and remaining capacity. The quotas sum to the same global budget \(K\), and Gaussians within each patch are selected by the same ranking score \(r_m\). This turns pruning from pure global competition into locally balanced allocation, giving detail-rich patches a better chance to retain their most valuable primitives.

This hard selection is also used in the training forward pass, while the soft gate provides the gradient path. Let \(\kappa_m\in\{0,1\}\) be the binary keep mask produced by budget selection. We use the straight-through form \(\tilde{\kappa}_m=\kappa_m+q_m-\operatorname{sg}(q_m)\), where \(\operatorname{sg}(\cdot)\) denotes stop-gradient, and render with final opacity \(\alpha^{\mathrm{final}}_m=\alpha_m q_m\tilde{\kappa}_m\). Thus, the renderer sees a compact Gaussian set in the forward pass, while the gate predictor remains trainable. At inference, the same hard selection is applied without the training schedules or straight-through gradient.

\subsection{Learning Objectives}

We train BudSplat with reconstruction losses and auxiliary objectives for detail estimation and budgeted pruning. The reconstruction term \(\mathcal{L}_{\mathrm{rec}}\) includes photometric and perceptual supervision, while the detail branch is supervised by \(\mathcal{L}_{\mathrm{detail}}\). To match the target keep ratio, we use \(\mathcal{L}_{\mathrm{budget}}=(\frac{1}{M}\sum_m q_m-\rho)^2\), together with an entropy-style regularizer to avoid premature gate collapse. We further align the gate with rendering utility using self-importance supervision, where \(u_m=|\partial\mathcal{L}_{\mathrm{rec}}/\partial\alpha_m|\) provides pseudo labels for influential Gaussians. A same-step ungated teacher regularizes the gated student in depth and normal space. The full objective is
\begin{equation}
\mathcal{L}=\mathcal{L}_{\mathrm{rec}}+\lambda_{\mathrm{detail}}\mathcal{L}_{\mathrm{detail}}+\lambda_{\mathrm{budget}}\mathcal{L}_{\mathrm{budget}}+\lambda_{\mathrm{gate}}\mathcal{L}_{\mathrm{gate}}+\lambda_{\mathrm{teach}}\mathcal{L}_{\mathrm{teach}} .
\end{equation}
\section{Experiments}

We evaluate our method on large-view-count feed-forward 3D Gaussian Splatting, focusing on both novel-view synthesis quality and rendering efficiency under limited Gaussian budgets. Our experiments compare the quality-efficiency trade-off against representative feed-forward 3DGS baselines and analyze the contribution of the proposed detail-aware fusion and pruning modules.

\subsection{Experimental Setup}

\paragraph{Datasets.}
We train our model on a mixture of RealEstate10K (RE10K)~\citep{zhou2018stereo} and DL3DV~\citep{ling2024dl3dv}, and evaluate it on RE10K, DL3DV, and ACID~\citep{liu2021infinite}. RE10K mainly contains forward-facing indoor and outdoor videos with smooth camera trajectories. DL3DV contains more diverse scenes and larger view changes, making it suitable for evaluating many-view reconstruction. ACID is used as an out-of-domain benchmark to test cross-dataset generalization. Unless otherwise specified, all methods are evaluated on the same context-target splits for each dataset.

\paragraph{Evaluation protocol.}
We build on the evaluation protocol of NoPoSplat~\citep{ye2024no} to construct a low-overlap, large-baseline stress test. Closely spaced inputs can let a compact Gaussian set overfit a narrow camera trajectory, so we evaluate on target views with large viewpoint changes and low visual overlap.
For RE10K and ACID, we start from the original NoPoSplat~\cite{ye2024no} test-view candidates and keep view pairs whose overlap is below \(30\%\). We then sample intermediate target views along the selected camera segment for novel-view evaluation. For DL3DV, we compute the view overlap between the two endpoint camera poses of each candidate trajectory segment and retain segments whose endpoint overlap is below \(30\%\). All methods are evaluated on the same context and target views. This common stress-test protocol changes target-view difficulty, so its absolute scores are not directly comparable to published standard-benchmark values.

We evaluate image quality with PSNR, SSIM, and LPIPS~\citep{zhang2018unreasonable}. In addition, we report the number of Gaussians used for rendering, denoted as \#GS, since our goal is to reduce rendering cost while preserving reconstruction quality. We evaluate all methods under different numbers of input views, including 16, 20, 24, 32, 48, and 72 views.

\paragraph{Baselines.}
We compare with representative large-view-count feed-forward 3DGS methods, including GGN~\citep{zhang2024gaussian}, AnySplat~\citep{jiang2025anysplat}, WorldMirror~\citep{liu2025worldmirror}, and VolSplat~\citep{wang2025volsplat}. GGN models relations among Gaussian groups and performs Gaussian pooling, AnySplat uses differentiable voxelization to merge dense primitives, and WorldMirror provides a general feed-forward world reconstruction framework with 3D Gaussian outputs. We also include VolSplat, which predicts voxel-aligned Gaussians with known camera poses. Since VolSplat assumes posed inputs while our method and the other baselines operate without camera poses, this comparison is not strictly fair and should be viewed as a pose-aware reference.

For each method, we use the official implementation and recommended configuration whenever available. All methods are evaluated using the same context and target views, and all rendered images are evaluated at \(448\times224\) resolution for a fair comparison. 


\begin{table}[t]
\centering
\caption{
Main quantitative comparison under increasing input views.
We report PSNR, SSIM, LPIPS, and the number of rendered Gaussians (\#GS).
For our method, Ours$_{\rho}$ denotes retaining a \(\rho\)-fraction of fused Gaussians after pruning.
Pink and orange cells indicate the best and second-best results under each dataset and view setting, following the metric direction indicated by the arrows.
OOM denotes out-of-memory during inference.
}
\label{tab:main_quantitative}
\setlength{\tabcolsep}{3pt}
\renewcommand{\arraystretch}{0.95}
\scriptsize
\resizebox{\linewidth}{!}{
\begin{tabular}{ll|cccc|cccc|cccc}
\toprule
\multirow{2}{*}{Dataset} &
\multirow{2}{*}{Method} &
\multicolumn{4}{c|}{16 Views} &
\multicolumn{4}{c|}{32 Views} &
\multicolumn{4}{c}{72 Views} \\
\cmidrule(lr){3-6}
\cmidrule(lr){7-10}
\cmidrule(lr){11-14}
&
& PSNR$\uparrow$ & SSIM$\uparrow$ & LPIPS$\downarrow$ & \#GS$\downarrow$
& PSNR$\uparrow$ & SSIM$\uparrow$ & LPIPS$\downarrow$ & \#GS$\downarrow$
& PSNR$\uparrow$ & SSIM$\uparrow$ & LPIPS$\downarrow$ & \#GS$\downarrow$ \\
\midrule

\multirow{6}{*}{RE10K}
& VolSplat (posed)    & \cellcolor{pink}23.63 & \cellcolor{pink}0.838 & 0.200 & 1605K & 22.06 & \cellcolor{orange}0.791 & 0.245 & 3210K & OOM & OOM & OOM & OOM \\
& GGN         & 19.57 & 0.738 & 0.276 & \cellcolor{orange}425K & 19.11 & 0.725 & 0.285 & 586K & OOM & OOM & OOM & OOM \\
& AnySplat    & 21.72 & 0.724 & \cellcolor{orange}0.187 & 1298K   & 22.69 & 0.756 & \cellcolor{orange}0.174 & 1363K & 23.48 & 0.789 & \cellcolor{pink}0.164 & 3461K\\
& WorldMirror & 22.37 & 0.768 & 0.194 & 1072K & 22.85 & 0.786 & 0.190 & 1782K & 22.92 & 0.791 & 0.189 & 3034K \\
\cmidrule(lr){2-14}
& Ours$_{\rho=5\%}$  & 22.48 & 0.768 & 0.193 & \cellcolor{pink}65K & \cellcolor{orange}23.08 & \cellcolor{orange}0.791 & 0.181 & \cellcolor{pink}97K & \cellcolor{orange}23.52 & \cellcolor{orange}0.803 & 0.179 & \cellcolor{pink}173K \\
& Ours$_{\rho=40\%}$ & \cellcolor{orange}23.22 & \cellcolor{orange}0.808 & \cellcolor{pink}0.186 & 519K & \cellcolor{pink}23.69 & \cellcolor{pink}0.830 & \cellcolor{pink}0.171 & \cellcolor{orange}545K & \cellcolor{pink}23.78 & \cellcolor{pink}0.842 & \cellcolor{orange}0.173 & \cellcolor{orange}1384K \\
\midrule

\multirow{6}{*}{DL3DV}
& VolSplat (posed)    & 11.71 & 0.224 & 0.674 & 1605K & 12.66 & 0.274 & 0.670 & 3210K & OOM & OOM & OOM & OOM \\
& GGN         & 12.54 & 0.318 & 0.631 & 719K & 11.91 & 0.296 & 0.628 & 1250K & OOM & OOM & OOM & OOM \\
& AnySplat    & 13.19 & 0.262 & 0.575 & 1347K & 13.87 & 0.281 & 0.573 & 2557K & 14.32 & 0.304 & 0.575 & 5176K \\
& WorldMirror & \cellcolor{pink}18.28 & \cellcolor{pink}0.568 & \cellcolor{pink}0.266 & 1345K & \cellcolor{orange}18.90 & \cellcolor{orange}0.597 & \cellcolor{pink}0.249 & 2509K & \cellcolor{orange}19.24 & \cellcolor{orange}0.614 & \cellcolor{pink}0.246 & 5002K \\
\cmidrule(lr){2-14}
& Ours$_{\rho=5\%}$  & 17.30 & 0.461 & 0.325 & \cellcolor{pink}67K & 18.08 & 0.500 & 0.302 & \cellcolor{pink}127K & 18.60 & 0.521 & 0.290 & \cellcolor{pink}258K \\
& Ours$_{\rho=40\%}$ & \cellcolor{orange}18.14 & \cellcolor{orange}0.554 & \cellcolor{orange}0.302 & \cellcolor{orange}538K & \cellcolor{pink}18.91 & \cellcolor{pink}0.602 & \cellcolor{orange}0.257 & \cellcolor{orange}1022K & \cellcolor{pink}19.27 & \cellcolor{pink}0.618 & \cellcolor{orange}0.250 & \cellcolor{orange}2070K \\
\midrule

\multirow{6}{*}{ACID}
& VolSplat (posed)    & \cellcolor{pink}25.85 & \cellcolor{pink}0.809 & \cellcolor{pink}0.199 & 1605K & \cellcolor{pink}23.76 & \cellcolor{orange}0.745 & 0.245 & 3210K & OOM & OOM & OOM & OOM \\
& GGN         & 22.39 & 0.688 & 0.298 & \cellcolor{orange}492K & 21.87 & 0.675 & 0.308 & \cellcolor{orange}671K & OOM & OOM & OOM & OOM \\
& AnySplat    & 21.95 & 0.638 & 0.240 & 1309K & 22.34 & 0.663 & 0.236 & 2310K & 22.41 & 0.674 & 0.235 & 4557K \\
& WorldMirror & 22.91 & 0.687 & 0.253 & 1245K & 23.09 & 0.723 & 0.239 & 2188K & \cellcolor{orange}22.95 & \cellcolor{orange}0.705 & 0.244 & 3989K \\
\cmidrule(lr){2-14}
& Ours$_{\rho=5\%}$  & 22.75 & 0.657 & 0.248 & \cellcolor{pink}65K & 23.00 & 0.675 & \cellcolor{orange}0.231 & \cellcolor{pink}115K & 22.68 & 0.676 & \cellcolor{orange}0.227 & \cellcolor{pink}227K \\
& Ours$_{\rho=40\%}$ & \cellcolor{orange}23.34 & \cellcolor{orange}0.747 & \cellcolor{orange}0.204 & 523K & \cellcolor{orange}23.42 & \cellcolor{pink}0.808 & \cellcolor{pink}0.200 & 924K & \cellcolor{pink}23.05 & \cellcolor{pink}0.788 & \cellcolor{pink}0.201 & \cellcolor{orange}1822K \\
\bottomrule
\end{tabular}
}
\end{table}

\paragraph{Implementation details.}
All experiments are conducted on 8 NVIDIA H100 GPUs with mixed-precision training. We initialize the model from the official VGGT pretrained weights and train on a mixture of RE10K and DL3DV with sampling weights 0.7 and 0.3, respectively. The training batch size is 1 per GPU. We use AdamW with learning rate \(2.0\times10^{-4}\), a warm-up of 180 steps, and train for 8K steps. The input resolution is \(448\times224\), and all methods are evaluated at \(448\times224\) rendering resolution.

During training, we randomly choose 4 to 24 context views. Our detail-token branch uses retained VGGT layers \(\{4,11,17,23\}\) with rank 64. We train the pruning module with stochastic keep ratios sampled from \(\{0.05,0.10,0.20,0.40\}\), biased toward lower budgets. More training details, loss weights, and module hyperparameters are provided in the supplementary material.


\subsection{Main Results}

\paragraph{Quantitative comparison.}
Table~\ref{tab:main_quantitative} compares our method with VolSplat, GGN, AnySplat, and WorldMirror on RE10K, DL3DV, and ACID. Following the low-overlap evaluation protocol described above, all methods are tested on challenging target views with large viewpoint changes. We report PSNR, SSIM, LPIPS, and the number of rendered Gaussians (\#GS), so the comparison reflects both novel-view quality and representation compactness.

We show three representative input-view settings, 16, 32, and 72 views, in the main table. Additional results for 20, 24, and 48 views are reported in the supplementary material. For our method, Ours$_{\rho=5\%}$ and Ours$_{\rho=40\%}$ denote two Gaussian budget settings, where \(\rho\) is the fraction of fused Gaussians retained after pruning. The 5\% setting evaluates aggressive compression, while the 40\% setting evaluates a higher-quality operating point.

The results show that our method achieves a strong quality-efficiency trade-off across datasets and input-view settings. VolSplat benefits from posed inputs and achieves strong results at smaller view counts on RE10K and ACID, but its Gaussian count grows rapidly and becomes memory-limited at larger view counts. GGN uses fewer Gaussians than other baselines, but its heavy grid-sampling operations become memory-intensive under large-view inputs, leading to out-of-memory cases. AnySplat and WorldMirror can handle longer inputs, but they often retain millions of Gaussians. In contrast, our method maintains competitive or better rendering quality with substantially fewer Gaussians. In particular, Ours$_{\rho=5\%}$ uses the smallest Gaussian count while preserving competitive quality, and Ours$_{\rho=40\%}$ achieves the best or second-best results in most settings.

\paragraph{Qualitative comparison.}
Figure~\ref{fig:main_qualitative} shows qualitative comparisons on challenging low-overlap target views. We compare with VolSplat, GGN, AnySplat, and WorldMirror under large-view inputs. As highlighted by the red boxes, our method produces more stable and spatially uniform renderings in detailed regions. Baselines may either retain many redundant Gaussians in easy regions or fail to allocate enough primitives to locally important structures, leading to incomplete coverage, blurred boundaries, or uneven rendering quality. In contrast, our detail-aware fusion and quality-aware pruning allocate the compact Gaussian budget to regions that are more relevant to novel-view quality, preserving thin structures, object boundaries, and high-frequency textures with fewer rendered Gaussians.

\begin{figure}[t]
\centering
\includegraphics[width=\textwidth]{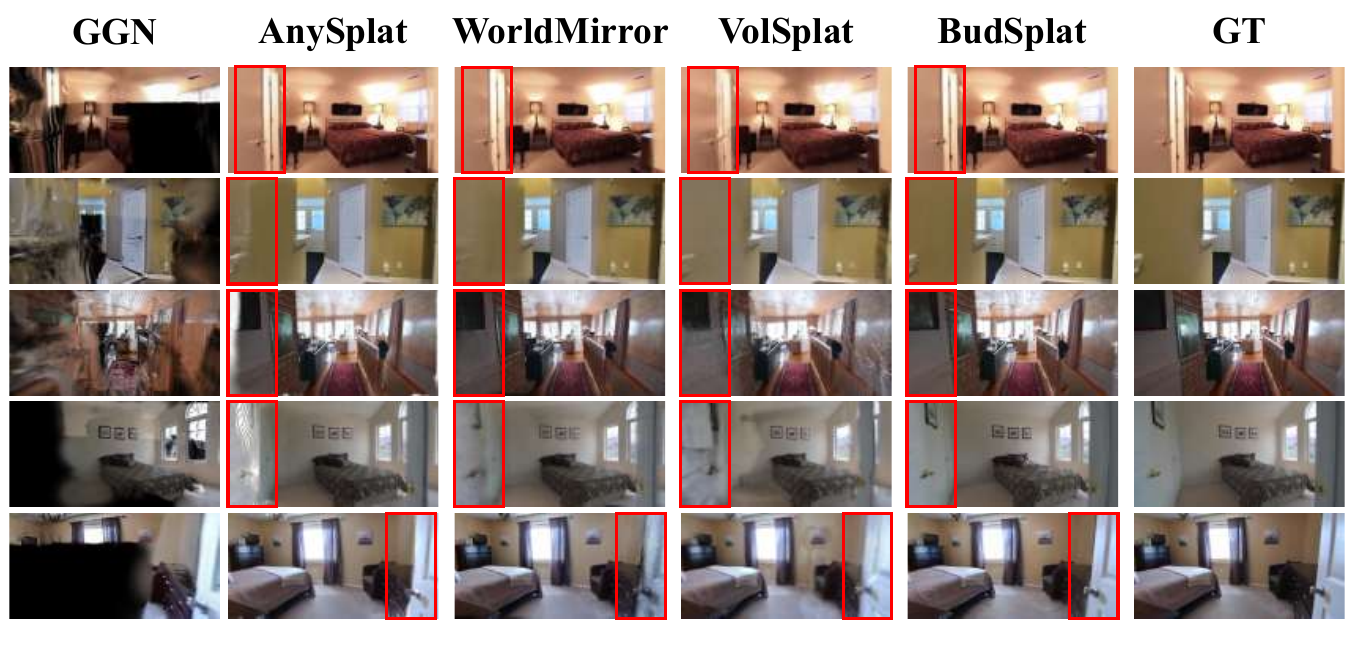}
\caption{
Qualitative comparison on challenging low-overlap target views.
Red boxes highlight regions where baselines suffer from incomplete Gaussian coverage, blurred local structures, or uneven rendering quality.
Our method allocates Gaussian budget more effectively, producing stable renderings and preserving perceptually important details with fewer Gaussians.
}
\label{fig:main_qualitative}
\end{figure}

\begin{table}[t]
\centering
\caption{
Ablation of the main design modules under the same Gaussian budget.
DT denotes detail-token importance estimation.
IVF denotes importance-preserving voxel fusion.
QAP denotes quality-aware pruning.
PQ denotes patch-level quota redistribution.
}
\label{tab:ablation_components}
\setlength{\tabcolsep}{3pt}
\renewcommand{\arraystretch}{0.95}
\scriptsize
\resizebox{\linewidth}{!}{
\begin{tabular}{lcccc|ccc}
\toprule
Variant & DT & IVF & QAP & PQ
& PSNR$\uparrow$ & SSIM$\uparrow$ & LPIPS$\downarrow$ \\
\midrule
Budgeted AnySplat-style baseline
&  &  &  &  & 13.23 & 0.522 & 0.686 \\
w/o detail-token importance
&  & \checkmark & \checkmark & \checkmark & 20.04 & 0.592 & 0.476 \\
w/o importance-preserving voxel fusion
& \checkmark &  & \checkmark & \checkmark & 22.47 & 0.768 & 0.214 \\
w/o quality-aware pruning score
& \checkmark & \checkmark &  & \checkmark & 22.93 & 0.784 & 0.209 \\
w/o patch quota redistribution
& \checkmark & \checkmark & \checkmark &  & 22.55 & 0.776 & 0.214 \\
Full model
& \checkmark & \checkmark & \checkmark & \checkmark & 23.02 & 0.791 & 0.201  \\
\bottomrule
\end{tabular}
}
\end{table}
\vspace{-0.5mm}
\subsection{Ablation Study}

We conduct an ablation study to verify the contribution of the main design modules. All variants are evaluated under the same Gaussian budget and the same low-overlap evaluation protocol of 24-view setting on RE10K, so the comparison focuses on reconstruction quality rather than differences in Gaussian count. We ablate four components: detail-token importance estimation, importance-preserving voxel fusion, quality-aware Gaussian pruning, and patch-level quota redistribution.
Table~\ref{tab:ablation_components} reports the results. The budgeted baseline uses the same Gaussian budget as our full model and selects primitives by opacity-only pruning after AnySplat-style voxel fusion. Removing the detail-token branch disables the explicit local detail prior used by both fusion and pruning. Removing importance-preserving voxel fusion falls back to confidence-dominated voxel aggregation. Removing quality-aware pruning replaces our ranking score with opacity-only selection, while removing patch quota redistribution uses global top-\(K\) selection without local budget balancing.

The results show that all proposed components contribute to the final performance. Compared with the budgeted baseline, adding the proposed modules substantially improves all metrics. Each removal from the full model leads to a performance drop, confirming that detail-aware importance, detail-preserving fusion, quality-aware ranking, and local quota redistribution are complementary. In particular, removing the detail-token branch causes the largest degradation, indicating that explicit local detail estimation is important for Gaussian prediction under a strict rendering budget.

\section{Conclusion}
We presented BudSplat, a compact feed-forward 3DGS framework for large-view-count novel view synthesis under a rendering budget. Instead of treating Gaussian reduction as post-hoc compression, BudSplat learns to allocate primitives according to rendering value. It extracts detail-aware importance from multi-level visual tokens, preserves this signal during voxel fusion, and selects fused Gaussians with quality-aware pruning and local quota redistribution. Experiments on RE10K, DL3DV, and ACID show that BudSplat achieves a favorable trade-off among rendering quality, Gaussian compactness, and efficiency. A limitation is that BudSplat reduces redundancy after the VGGT geometry backbone, but does not optimize the backbone memory itself. Thus, very dense input sequences may still be limited by VGGT computation before Gaussian pruning is applied. BudSplat is complementary to memory-efficient VGGT variants, and future work may combine backbone-level optimization with budget-aware Gaussian allocation. The method also depends on the reliability of predicted geometry and confidence cues, and extremely small budgets can still remove details in highly complex regions.

\section{Acknowledgement}
This work was supported by Ministry of Education, Singapore, under the Tier-1 project scheme (Project No. RG112/25). The project WBS is 04MNP004621C130OST01.

\newpage
\bibliographystyle{plainnat}
\bibliography{main}

\newpage
\appendix



\section{Additional implementation details}
For our method, the detail-token branch uses retained VGGT layers \(\{4,11,17,23\}\) and a rank of 64. The importance-preserving voxel fusion uses confidence, detail, and support-prior weights of 1.0, 1.5, and 0.5, respectively, with fusion temperature 1.0. The voxel-detail mixing weight is 0.5, and the support-aware ranking penalty is set to 0.15. Budget gating uses a two-layer MLP with hidden dimension 128. The gate temperature is annealed from 2.0 to 0.7, and the gate floor is annealed from 0.2 to 0. The keep ratio is annealed from 1.0 toward 0.05. In the late training stage, we enable stochastic budget training by sampling \(\rho\) from \(\{0.05,0.10,0.20,0.40\}\), with a bias toward lower budgets. LPIPS is applied after 1K steps with weight 0.05. We additionally use hard-patch LPIPS after 2.5K steps and a lightweight high-frequency loss after 1.8K steps. 

\section{Mathematical Motivation}
\label{app:theory}

This section provides a mathematical motivation for the main design choices in BudSplat. The results below are not intended to prove dataset-level improvements in PSNR or LPIPS. Instead, they justify the surrogate objectives and allocation rules used by our detail-aware fusion and quality-aware pruning modules. In particular, the analysis assumes that the learned detail scores are positively correlated with rendering-relevant local structures.

\paragraph{Notation.}
Consider dense Gaussian contributors before voxel fusion. A contributor \(j\) has 3D position \(\mathbf{x}_j\), feature \(\mathbf{h}_j\), confidence \(c_j\), detail prior \(d_j\), and support prior \(t_j\). Contributors assigned to voxel \(m\) form the set \(\mathcal{V}_m\). BudSplat computes the fusion logit \(\ell_j=\lambda_c c_j+\lambda_d d_j+\lambda_t t_j\), and the normalized fusion weight \(w_{j|m}=\operatorname{softmax}_{j\in\mathcal{V}_m}(\ell_j/\tau_f)\). The fused feature is \(\bar{\mathbf{h}}_m=\sum_{j\in\mathcal{V}_m}w_{j|m}\mathbf{h}_j\). After fusion, the resulting Gaussian has opacity \(\alpha_m\), detail score \(D_m\), and voxel support \(s_m=|\mathcal{V}_m|\).

\subsection{Residual Token Decomposition}

BudSplat estimates local detail importance from residual token components. The actual module is a learned non-linear down-up projection with detail supervision. The following idealized result explains why separating a compact common component from a residual component is a reasonable design.

\begin{proposition}[Residual energy under an idealized low-rank projection]
Let \(\mathbf{x}\in\mathbb{R}^{C}\) be a zero-mean token with covariance \(\boldsymbol{\Sigma}\). Consider an idealized rank-\(r\) orthogonal projection \(P\) that reconstructs \(\mathbf{x}\) by \(P\mathbf{x}\). The projection minimizing the reconstruction error \(\mathbb{E}\|\mathbf{x}-P\mathbf{x}\|_2^2\) is the projection onto the top-\(r\) eigenspace of \(\boldsymbol{\Sigma}\). The residual \(\mathbf{x}-P\mathbf{x}\) contains the token components outside this common low-dimensional subspace.
\end{proposition}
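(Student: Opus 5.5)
The plan is to reduce the reconstruction error to a trace expression in \(P\) and then recognize the standard PCA/Ky Fan extremal problem.

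\emph{Step 1: reduce to a trace.} Let \(P\) be an orthogonal projection of rank \(r\), so \(P=P^\top=P^2\) and \(I-P\) is also an orthogonal projection. Then
\[
\|\mathbf{x}-P\mathbf{x}\|_2^2=\mathbf{x}^\top(I-P)\mathbf{x}=\operatorname{tr}\bigl((I-P)\mathbf{x}\mathbf{x}^\top\bigr).
\]
Taking expectations and using \(\mathbb{E}[\mathbf{x}\mathbf{x}^\top]=\boldsymbol{\Sigma}\), which holds because \(\mathbf{x}\) is zero-mean, gives
\[
\mathbb{E}\|\mathbf{x}-P\mathbf{x}\|_2^2=\operatorname{tr}(\boldsymbol{\Sigma})-\operatorname{tr}(P\boldsymbol{\Sigma}).
\]
So minimizing the error over rank-\(r\) orthogonal projections is the same as maximizing \(\operatorname{tr}(P\boldsymbol{\Sigma})=\operatorname{tr}(U^\top\boldsymbol{\Sigma}U)\). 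Here we write \(P=UU^\top\), where \(U\in\mathbb{R}^{C\times r}\) has orthonormal columns.

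\emph{Step 2: maximize the trace.} Diagonalize \(\boldsymbol{\Sigma}=Q\Lambda Q^\top\) with \(\lambda_1\ge\dots\ge\lambda_C\ge0\). Define \(p_i=\|P\mathbf{q}_i\|^2=\mathbf{q}_i^\top P\mathbf{q}_i\). Then \(\operatorname{tr}(P\boldsymbol{\Sigma})=\sum_i\lambda_ip_i\), with the constraints \(0\le p_i\le1\) and \(\sum_ip_i=\operatorname{tr}(P)=r\). Maximizing a linear function of the \(p_i\) under these constraints is a fractional-knapsack argument. A simple exchange (or Abel summation) shows that \(\sum_i\lambda_ip_i\le\sum_{i\le r}\lambda_i\). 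Equality is attained by \(P=\sum_{i\le r}\mathbf{q}_i\mathbf{q}_i^\top\). As a result, the minimal error equals \(\sum_{i>r}\lambda_i\). This step, the Ky Fan maximum principle, is the main obstacle. It also requires care with ties: if \(\lambda_r=\lambda_{r+1}\), the minimizer is any projection onto a top-\(r\) eigenspace, and is not unique. I will state the result in that form.

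\emph{Step 3: interpret the residual.} For the optimal \(P\), the residual is
\[
\mathbf{x}-P\mathbf{x}=\sum_{i>r}(\mathbf{q}_i^\top\mathbf{x})\mathbf{q}_i.
\]
This is exactly the component of \(\mathbf{x}\) in the orthogonal complement of the top-\(r\) eigenspace. Its expected energy is \(\sum_{i>r}\lambda_i\), which establishes the final claim of the statement.
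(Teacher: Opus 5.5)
Your proposal is correct and follows the same route as the paper: reduce the error to $\operatorname{tr}(\boldsymbol{\Sigma})-\operatorname{tr}(P\boldsymbol{\Sigma})$, then maximize $\operatorname{tr}(P\boldsymbol{\Sigma})$ over rank-$r$ orthogonal projectors. The paper only cites the Rayleigh--Ritz theorem for that maximization, while you prove it directly (it is Ky Fan's principle) via the weights $p_i=\mathbf{q}_i^\top P\mathbf{q}_i$, and you also handle the non-uniqueness when $\lambda_r=\lambda_{r+1}$, which the paper's wording passes over.
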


\begin{proof}
For an orthogonal projector \(P\), the reconstruction error is
\[
\mathbb{E}\|\mathbf{x}-P\mathbf{x}\|_2^2
=
\operatorname{tr}(\boldsymbol{\Sigma})-\operatorname{tr}(P\boldsymbol{\Sigma}).
\]
Minimizing the error is therefore equivalent to maximizing \(\operatorname{tr}(P\boldsymbol{\Sigma})\) over all rank-\(r\) orthogonal projectors. By the Rayleigh--Ritz theorem~\citep{ritz1909neue}, the maximizer is the projector onto the eigenspace spanned by the top \(r\) eigenvectors of \(\boldsymbol{\Sigma}\). Hence the residual removes the best rank-\(r\) common component under this idealized objective.
\end{proof}

\paragraph{Implication.}
Our learned down-up projection module is not restricted to be an orthogonal PCA projector, so the proposition should not be interpreted as an exact characterization of the trained network. It provides an idealized justification: reconstructing common token components with a compact low-rank module encourages the residual branch to emphasize information not explained by dominant subspace structure, which is a useful cue for boundaries, thin geometry, and high-frequency variations.

\subsection{Importance-Preserving Voxel Fusion}

The baseline confidence-only fusion uses weights proportional to \(\exp(\lambda_c c_j/\tau_f)\). BudSplat adds detail and support terms to the fusion logit. The following lemma shows how these additional terms change the relative odds of contributors inside the same voxel.

\begin{lemma}[Pairwise amplification by detail and support]
For any two contributors \(a,b\in\mathcal{V}_m\), their weight ratio under BudSplat fusion is
\[
\frac{w_{a|m}}{w_{b|m}}
=
\exp\!\left(
\frac{
\lambda_c(c_a-c_b)+\lambda_d(d_a-d_b)+\lambda_t(t_a-t_b)
}{\tau_f}
\right).
\]
Compared with confidence-only fusion, the multiplicative change in odds is
\[
\exp\!\left(
\frac{\lambda_d(d_a-d_b)+\lambda_t(t_a-t_b)}{\tau_f}
\right).
\]
Thus, if contributor \(a\) has higher combined detail and support than contributor \(b\), its relative fusion weight is increased.
\end{lemma}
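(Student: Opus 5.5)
The plan is to compute directly from the softmax definition of the fusion weights. Since both contributors lie in the same voxel \(\mathcal{V}_m\), they share the normalizing constant \(Z_m=\sum_{k\in\mathcal{V}_m}\exp(\ell_k/\tau_f)\). Writing \(w_{a|m}=\exp(\ell_a/\tau_f)/Z_m\) and \(w_{b|m}=\exp(\ell_b/\tau_f)/Z_m\), the ratio cancels \(Z_m\) and reduces to \(\exp((\ell_a-\ell_b)/\tau_f)\). Substituting \(\ell_j=\lambda_c c_j+\lambda_d d_j+\lambda_t t_j\) and grouping terms by coefficient then gives the first displayed identity.

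Next I will treat confidence-only fusion as the same softmax restricted to \(\mathcal{V}_m\), but with logit \(\lambda_c c_j\). Its normalizer is different, but it again cancels in the pairwise ratio, so the baseline odds are \(\exp(\lambda_c(c_a-c_b)/\tau_f)\). Dividing the BudSplat odds by the baseline odds uses \(e^{x+y}/e^{x}=e^{y}\) and yields the stated multiplicative factor \(\exp\!\left((\lambda_d(d_a-d_b)+\lambda_t(t_a-t_b))/\tau_f\right)\).

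For the final claim, I will read ``higher combined detail and support'' as \(\lambda_d d_a+\lambda_t t_a>\lambda_d d_b+\lambda_t t_b\). This is the natural reading given \(\lambda_d,\lambda_t\ge 0\); the implementation uses \(1.5\) and \(0.5\). Under this reading the exponent is strictly positive because \(\tau_f>0\), so the factor exceeds \(1\) and the odds of \(a\) relative to \(b\) strictly increase.

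There is no genuine obstacle here. The only care needed is to make this reading of ``combined'' explicit, and to note that the statement concerns pairwise odds. Whether the absolute weight \(w_{a|m}\) rises also depends on the other members of \(\mathcal{V}_m\), so the conclusion will be phrased in terms of relative weight, exactly as the lemma states.
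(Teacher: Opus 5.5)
Your proposal is correct and follows essentially the same route as the paper. Both arguments take the ratio of softmax weights within \(\mathcal{V}_m\) so that the normalizer cancels, then divide by the confidence-only ratio so that the \(\lambda_c\) term drops out. Your explicit reading of ``higher combined detail and support'' as \(\lambda_d d_a+\lambda_t t_a>\lambda_d d_b+\lambda_t t_b\) with \(\tau_f>0\) is a useful precision that the paper leaves implicit.
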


\begin{proof}
The result follows directly from the definition of the softmax weights. Dividing the BudSplat ratio by the confidence-only ratio cancels the confidence term and leaves only the detail and support terms.
\end{proof}

This result formalizes the dilution issue. If a rare boundary contributor has only moderate confidence, confidence-only fusion can underweight it among many smooth-region contributors. Adding detail and support terms gives the model a mechanism to increase its relative contribution during voxel aggregation.

\begin{proposition}[Directional preservation under softmax fusion]
Let \(\bar{\mathbf{h}}_m=\sum_{j\in\mathcal{V}_m}w_{j|m}\mathbf{h}_j\) be the fused feature. For any feature direction \(\mathbf{u}\), the derivative of the fused response with respect to the logit of contributor \(k\) is
\[
\frac{\partial}{\partial \ell_k}
\left(\mathbf{u}^{\top}\bar{\mathbf{h}}_m\right)
=
\frac{1}{\tau_f}
w_{k|m}
\left(
\mathbf{u}^{\top}\mathbf{h}_k
-
\mathbf{u}^{\top}\bar{\mathbf{h}}_m
\right).
\]
Therefore, if contributor \(k\) is above the current voxel average along direction \(\mathbf{u}\), increasing its fusion logit cannot decrease the fused response along that direction.
\end{proposition}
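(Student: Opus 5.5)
The plan is a direct computation of the softmax Jacobian, followed by a sign argument. I treat the contributor features \(\mathbf{h}_j\) as fixed, so the fused response \(\mathbf{u}^{\top}\bar{\mathbf{h}}_m=\sum_{j\in\mathcal{V}_m}w_{j|m}\,\mathbf{u}^{\top}\mathbf{h}_j\) depends on \(\ell_k\) only through the weights. I also treat the logits \(\ell_j\) as independent variables; the result is a statement about sensitivity to the logit itself, so no chain rule through \(c_k,d_k,t_k\) is needed.

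\emph{Step 1: the softmax derivative.} Writing \(w_{j|m}=\exp(\ell_j/\tau_f)/Z_m\) with \(Z_m=\sum_{i\in\mathcal{V}_m}\exp(\ell_i/\tau_f)\), I differentiate and obtain
\[
\frac{\partial w_{j|m}}{\partial \ell_k}=\frac{1}{\tau_f}\,w_{j|m}\bigl(\mathbb{1}[j=k]-w_{k|m}\bigr).
\]
This comes from the quotient rule, using \(\partial Z_m/\partial\ell_k=\exp(\ell_k/\tau_f)/\tau_f\).

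\emph{Step 2: substitute into the fused response.} Plugging Step 1 into the linear sum gives
\[
\frac{1}{\tau_f}\Bigl(w_{k|m}\,\mathbf{u}^{\top}\mathbf{h}_k-w_{k|m}\sum_{j}w_{j|m}\,\mathbf{u}^{\top}\mathbf{h}_j\Bigr).
\]
The inner sum is exactly \(\mathbf{u}^{\top}\bar{\mathbf{h}}_m\), which yields the stated formula.

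\emph{Step 3: monotonicity.} Since \(\tau_f>0\) and \(w_{k|m}>0\), the derivative has the same sign as \(\mathbf{u}^{\top}\mathbf{h}_k-\mathbf{u}^{\top}\bar{\mathbf{h}}_m\). So it is nonnegative whenever contributor \(k\) lies above the current average along \(\mathbf{u}\).

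For the stronger, non-infinitesimal reading ("increasing its logit cannot decrease the response"), a pointwise sign at the starting value is not by itself enough, because \(\bar{\mathbf{h}}_m\) moves as \(\ell_k\) grows. The fix is to view the response as a function of \(\ell_k\) alone. By Step 2, it can be written as
\[
\mathbf{u}^{\top}\bar{\mathbf{h}}_m=(1-w_{k|m})A+w_{k|m}\,\mathbf{u}^{\top}\mathbf{h}_k,
\]
where \(A\) is the weighted average of the other contributors, which does not depend on \(\ell_k\). This is a convex combination in which the weight \(w_{k|m}\) is increasing in \(\ell_k\).

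The condition \(\mathbf{u}^{\top}\mathbf{h}_k\ge\mathbf{u}^{\top}\bar{\mathbf{h}}_m\) is equivalent to \(\mathbf{u}^{\top}\mathbf{h}_k\ge A\) whenever \(w_{k|m}<1\). That condition is independent of \(\ell_k\), so it persists along the whole path. The response therefore stays monotone nondecreasing, which turns the local sign statement into the global one. This persistence argument is the only step that needs any care; the rest is routine.
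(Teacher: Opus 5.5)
Your proposal is correct and matches the paper's proof, which likewise differentiates the softmax weights, $\partial w_{j|m}/\partial \ell_k=\tau_f^{-1}w_{j|m}(\mathbf{1}_{j=k}-w_{k|m})$, substitutes this into $\bar{\mathbf{h}}_m$, and takes the inner product with $\mathbf{u}$. Your extra decomposition $\mathbf{u}^{\top}\bar{\mathbf{h}}_m=(1-w_{k|m})A+w_{k|m}\,\mathbf{u}^{\top}\mathbf{h}_k$, with $A$ independent of $\ell_k$, is valid and goes beyond the paper, which reads the ``cannot decrease'' claim only from the derivative's sign at the current point; your version shows the condition $\mathbf{u}^{\top}\mathbf{h}_k\ge A$ persists as $\ell_k$ grows, so monotonicity holds for finite increases as well.
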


\begin{proof}
For softmax weights, \(\partial w_{j|m}/\partial \ell_k=\tau_f^{-1}w_{j|m}(\mathbf{1}_{j=k}-w_{k|m})\). Therefore,
\[
\frac{\partial\bar{\mathbf{h}}_m}{\partial \ell_k}
=
\sum_{j\in\mathcal{V}_m}
\frac{\partial w_{j|m}}{\partial \ell_k}\mathbf{h}_j
=
\frac{1}{\tau_f}w_{k|m}
\left(
\mathbf{h}_k-\bar{\mathbf{h}}_m
\right).
\]
Taking the inner product with \(\mathbf{u}\) gives the claim.
\end{proof}

\paragraph{Implication.}
This proposition does not claim that the learned detail score is always correct. It states a conditional property: once the detail prior identifies a contributor that is important along a rendering-relevant feature direction, the fusion rule provides a monotone mechanism for preserving it.

\subsection{Quality-Aware Gaussian Pruning}

After voxel fusion, BudSplat ranks each fused Gaussian by
\[
r_m=\log(\alpha_m+\epsilon)+\beta D_m-\gamma\log(1+s_m),
\]
where \(\alpha_m\) is opacity, \(D_m\) is detail importance, and \(s_m\) is voxel support. This ranking can be viewed as the solution to a simple fixed-budget surrogate.

\begin{proposition}[Top-\(K\) optimality for a separable surrogate]
Given \(M\) fused Gaussians and a budget \(K\leq M\), consider
\[
\max_{\kappa_m\in\{0,1\}}
\sum_{m=1}^{M}\kappa_m r_m
\quad
\text{s.t.}
\quad
\sum_{m=1}^{M}\kappa_m=K .
\]
Every optimizer selects the \(K\) Gaussians with the largest ranking scores \(r_m\).
\end{proposition}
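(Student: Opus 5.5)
The plan is a standard exchange argument on the feasible set $\{\kappa\in\{0,1\}^M:\sum_m\kappa_m=K\}$. This set is finite and nonempty because $K\le M$, so a maximizer exists. Identify each feasible $\kappa$ with its support $S=\{m:\kappa_m=1\}$, where $|S|=K$. The objective is then $F(S)=\sum_{m\in S}r_m$, and the task is to show that every maximizing $S$ consists of $K$ indices with the largest scores.

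To make ``the $K$ largest'' precise when scores tie, sort the scores as $r_{(1)}\ge r_{(2)}\ge\dots\ge r_{(M)}$. Say that $S$ is a top-$K$ set if $\min_{m\in S}r_m\ge\max_{m\notin S}r_m$. This is equivalent to $S$ containing every index with $r_m>r_{(K)}$ and otherwise drawing only from indices with $r_m=r_{(K)}$. With this definition the claim reads: every optimizer is a top-$K$ set.

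The proof is by contradiction.
\begin{enumerate}
\item Let $S$ be optimal and suppose it is not a top-$K$ set.
\item Then there exist $a\in S$ and $b\notin S$ with $r_b>r_a$.
\item Form the swapped set $S'=(S\setminus\{a\})\cup\{b\}$. It still satisfies $|S'|=K$, so it is feasible.
\item Its objective is $F(S')=F(S)-r_a+r_b>F(S)$, which contradicts the optimality of $S$.
\end{enumerate}

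For completeness, I will also note the converse: every top-$K$ set attains the value $\sum_{i=1}^K r_{(i)}$. This holds because the multiset of scores in any top-$K$ set equals $\{r_{(1)},\dots,r_{(K)}\}$. Hence the optimal value is exactly this sum, and the optimizers are precisely the top-$K$ sets.

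The only delicate point is ties. If several Gaussians share the score $r_{(K)}$, the optimizer is not unique, and ``the $K$ largest'' must be read as ``some set of $K$ indices whose scores dominate all excluded scores.'' I will state this tie convention explicitly so that the word ``every'' in the statement is correct. Beyond this, the argument needs nothing specific to the form $r_m=\log(\alpha_m+\epsilon)+\beta D_m-\gamma\log(1+s_m)$. It uses only that the objective is separable and linear in $\kappa$ with a cardinality constraint, so no earlier result from the paper is required.
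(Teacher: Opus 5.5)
Your proof is correct and follows essentially the same route as the paper: swap a selected lower-scoring index for an unselected higher-scoring one, note that the objective strictly increases, and contradict optimality. Your explicit tie convention, the existence remark, and the converse (every top-$K$ set attains $\sum_{i=1}^{K} r_{(i)}$) are careful additions that the paper leaves implicit.
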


\begin{proof}
Suppose a feasible solution selects \(i\) but does not select \(j\), with \(r_j>r_i\). Replacing \(i\) by \(j\) increases the objective by \(r_j-r_i>0\), contradicting optimality. Therefore, an optimum cannot exclude a higher-scoring Gaussian while including a lower-scoring one.
\end{proof}

\begin{corollary}[Effect of support-aware redundancy correction]
If two Gaussians have identical opacity and identical detail score, but \(s_a>s_b\), then \(r_a<r_b\) whenever \(\gamma>0\).
\end{corollary}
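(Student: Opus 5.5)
The plan is a direct computation from the ranking score in Eq.~\eqref{eq:ranking}, followed by a link back to the preceding top-\(K\) proposition. Since \(\alpha_a=\alpha_b\) and \(D_a=D_b\), the opacity terms \(\log(\alpha_a+\epsilon)\) and \(\log(\alpha_b+\epsilon)\) coincide. The detail terms \(\beta D_a\) and \(\beta D_b\) coincide as well. Taking the difference of the two scores therefore cancels everything except the support correction:
\[
r_a-r_b=-\gamma\bigl(\log(1+s_a)-\log(1+s_b)\bigr).
\]

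Next I will sign this quantity. The voxel supports are cardinalities, \(s_m=|\mathcal{V}_m|\ge 1\), so both \(1+s_a\) and \(1+s_b\) are positive and the logarithms are well defined. The map \(s\mapsto\log(1+s)\) is strictly increasing on \([0,\infty)\), so \(s_a>s_b\) gives \(\log(1+s_a)-\log(1+s_b)>0\). Multiplying by \(-\gamma<0\) then yields \(r_a-r_b<0\), i.e.\ \(r_a<r_b\).

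Finally, I will state the consequence through the top-\(K\) optimality proposition. Every optimizer of the separable surrogate keeps Gaussians in decreasing order of \(r_m\). Hence, between two otherwise identical Gaussians, the lower-support one is never dropped in favor of the higher-support one. The same holds within a patch under quota redistribution, since selection inside each patch uses the same ranking \(r_m\).

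There is no substantive obstacle; the only point needing care is checking that the shared \(\epsilon\) and \(\beta\) make the opacity and detail terms cancel exactly, which they do because the score is additively separable.
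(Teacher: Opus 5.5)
Your proof is correct and follows the paper's argument. The paper's proof is just the one-line appeal to the strict monotonicity of \(\log(1+s)\); you make the cancellation of the opacity and detail terms and the sign flip from \(-\gamma<0\) explicit, and your closing remark about top-\(K\) selection goes beyond what the corollary asserts but is consistent with it.
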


\begin{proof}
The claim follows from the strict monotonicity of \(\log(1+s)\).
\end{proof}

\paragraph{Implication.}
The support term does not assert that high-support Gaussians are unimportant. It only prevents support alone from allowing highly populated regions to dominate the retained set. A high-support Gaussian can still be selected when it has sufficiently high opacity or detail importance.

\subsection{Patch-Level Quota Redistribution}

Global top-\(K\) selection can favor large regions because they contain many candidates. BudSplat therefore redistributes part of the budget across patch bins. The following idealized relaxation motivates proportional allocation by patch importance.

\begin{proposition}[Continuous relaxation of quota allocation]
Let \(p=1,\dots,P\) index active patches, and let \(a_p>0\) denote a patch weight derived from patch detail and available candidate count. Consider the continuous allocation problem
\[
\max_{k_p>0}
\sum_{p=1}^{P}a_p\log k_p
\quad
\text{s.t.}
\quad
\sum_{p=1}^{P}k_p=K .
\]
The unique optimum is
\[
k_p^{\star}
=
K\frac{a_p}{\sum_{q=1}^{P}a_q}.
\]
\end{proposition}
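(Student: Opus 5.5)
I will treat this as a strictly concave maximization over the relatively open simplex $\{k\in\mathbb{R}^P_{>0}:\sum_p k_p=K\}$. I will use Lagrange multipliers to locate the stationary point and concavity to certify that it is the global and unique optimum.

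\emph{Stationary point.} Form the Lagrangian
\[
\mathcal{L}(k,\nu)=\sum_{p=1}^{P}a_p\log k_p-\nu\Big(\sum_{p=1}^{P}k_p-K\Big).
\]
Setting $\partial\mathcal{L}/\partial k_p=a_p/k_p-\nu=0$ gives $k_p=a_p/\nu$. Since every $a_p>0$, this forces $\nu>0$. Substituting into the budget constraint yields $\nu=\sum_q a_q/K$, hence $k_p^\star=Ka_p/\sum_q a_q$. This point is strictly positive, so it lies in the feasible set.

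\emph{Global optimality.} The objective $f(k)=\sum_p a_p\log k_p$ is strictly concave on $\mathbb{R}^P_{>0}$ because its Hessian is $\operatorname{diag}(-a_p/k_p^2)$, which is negative definite. For any feasible $k$, the tangent-plane inequality of a concave function gives
\[
f(k)\le f(k^\star)+\nabla f(k^\star)^\top(k-k^\star).
\]
Here $\nabla f(k^\star)=\nu\mathbf{1}$, and $\mathbf{1}^\top(k-k^\star)=K-K=0$, so $f(k)\le f(k^\star)$. Strict concavity makes the inequality strict whenever $k\neq k^\star$, which gives uniqueness.

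\emph{Alternative route.} The same bound can be read off Gibbs' inequality. Writing $\pi_p=a_p/\sum_q a_q$ and $\hat\pi_p=k_p/K$, the objective equals $(\sum_q a_q)\big[\log K+\sum_p\pi_p\log\hat\pi_p\big]$. This is maximized exactly when $\hat\pi=\pi$, since $\mathrm{KL}(\pi\|\hat\pi)\ge 0$ with equality only at $\hat\pi=\pi$.

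\emph{Main obstacle.} The feasible set is open, so a compactness argument for existence is not available. I avoid this by using the concavity (or Gibbs) inequality directly rather than first assuming a maximizer exists. I will also note that the objective tends to $-\infty$ as any $k_p\to 0^+$, which confirms that no optimum sits on the boundary.
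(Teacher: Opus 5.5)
Your proposal is correct and follows essentially the same route as the paper: Lagrangian stationarity gives $k_p=a_p/\nu$, the budget constraint fixes $\nu=\sum_q a_q/K$, and strict concavity of the objective makes this stationary point the unique global optimum. Your tangent-plane inequality just spells out the paper's remark that the KKT conditions are sufficient under concavity, and your Gibbs/KL reformulation is a valid optional alternative.
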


\begin{proof}
The objective is strictly concave on the positive orthant, so the KKT conditions are sufficient. The Lagrangian is
\[
\mathcal{L}(k,\mu)
=
\sum_{p=1}^{P}a_p\log k_p
-
\mu\left(\sum_{p=1}^{P}k_p-K\right).
\]
Stationarity gives \(a_p/k_p=\mu\), hence \(k_p=a_p/\mu\). Enforcing \(\sum_p k_p=K\) yields \(\mu=(\sum_q a_q)/K\), giving the stated solution.
\end{proof}

\paragraph{Implication.}
The implemented quota redistribution is discrete and includes capacity clipping and integer rounding. Therefore, it is not exactly the optimizer of the continuous problem above. The proposition only justifies the proportional-allocation principle: allocating more quota to higher-detail and higher-capacity patches is the solution of a natural concave resource allocation relaxation.

\subsection{Hybrid Detail Supervision}

The detail branch is supervised by a hybrid target that combines rendering residuals and photometric/geometric variation. The following idealized result explains why such a blend can be more stable than using either source alone.

\begin{proposition}[Variance of a hybrid detail proxy]
Let \(I\) be a latent patch-level importance variable. Suppose \(R=I+\varepsilon_r\) and \(V=I+\varepsilon_v\) are two unbiased noisy proxies, with variances \(\sigma_r^2,\sigma_v^2\) and covariance \(\sigma_{rv}\). For \(H_\lambda=\lambda R+(1-\lambda)V\), the mean-squared error is
\[
\mathbb{E}(H_\lambda-I)^2
=
\lambda^2\sigma_r^2
+
(1-\lambda)^2\sigma_v^2
+
2\lambda(1-\lambda)\sigma_{rv}.
\]
The unconstrained minimizer is
\[
\lambda^\star
=
\frac{\sigma_v^2-\sigma_{rv}}
{\sigma_r^2+\sigma_v^2-2\sigma_{rv}} .
\]
The constrained minimizer over \([0,1]\) is obtained by clipping \(\lambda^\star\) to \([0,1]\), and its error is no worse than the better individual proxy because the endpoints \(\lambda=0\) and \(\lambda=1\) are feasible.
\end{proposition}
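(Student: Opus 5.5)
The plan is to expand the error directly and then minimize a one-dimensional quadratic. Since \(\lambda+(1-\lambda)=1\), I will write
\[
H_\lambda-I=\lambda(R-I)+(1-\lambda)(V-I)=\lambda\varepsilon_r+(1-\lambda)\varepsilon_v .
\]
Squaring and taking expectations, and using unbiasedness (\(\mathbb{E}\varepsilon_r=\mathbb{E}\varepsilon_v=0\)), the second moments become \(\sigma_r^2\), \(\sigma_v^2\) and \(\sigma_{rv}\). This yields the stated mean-squared-error formula. I will read the variances and covariance as those of the noise terms, which is the reading under which the formula holds.

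Next I will regard \(f(\lambda)=\mathbb{E}(H_\lambda-I)^2\) as a quadratic in \(\lambda\). Its leading coefficient is
\[
\sigma_r^2+\sigma_v^2-2\sigma_{rv}=\mathbb{E}(\varepsilon_r-\varepsilon_v)^2\ge 0 .
\]
I will differentiate to get
\[
f'(\lambda)=2\lambda\sigma_r^2-2(1-\lambda)\sigma_v^2+2(1-2\lambda)\sigma_{rv}.
\]
Setting \(f'(\lambda)=0\) and solving the linear equation gives \(\lambda^\star\) as stated. When the leading coefficient is strictly positive, \(f\) is strictly convex and \(\lambda^\star\) is its unique minimizer. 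The only delicate point is the degenerate case \(\varepsilon_r=\varepsilon_v\) almost surely, where the coefficient vanishes, \(f\) is constant, and \(\lambda^\star\) is undefined. I will note that every \(\lambda\) is then optimal, so the claim holds with any convention.

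For the constrained problem, I will use the fact that a convex quadratic restricted to an interval attains its minimum at the projection of the unconstrained minimizer onto that interval. Concretely, \(f\) is decreasing to the left of \(\lambda^\star\) and increasing to its right. So if \(\lambda^\star<0\) the minimum over \([0,1]\) is at \(0\), if \(\lambda^\star>1\) it is at \(1\), and otherwise it is at \(\lambda^\star\). This establishes the clipping rule.

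Finally, since \(\lambda=0\) and \(\lambda=1\) lie in \([0,1]\), the constrained minimum satisfies
\[
\min_{[0,1]}f\le\min\{f(0),f(1)\}=\min\{\sigma_v^2,\sigma_r^2\}.
\]
These endpoint values are exactly the errors of the two individual proxies \(V\) and \(R\), which gives the last claim. I do not expect any step to be a genuine obstacle; the degenerate-convexity case is the only detail to flag.
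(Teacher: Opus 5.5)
Your proposal is correct and follows the same route as the paper: expand \(H_\lambda-I=\lambda\varepsilon_r+(1-\lambda)\varepsilon_v\), differentiate the resulting quadratic in \(\lambda\), and use that the endpoints \(\lambda=0,1\) are feasible. It is also somewhat more careful than the paper's proof, which does not explicitly justify the clipping step via convexity (leading coefficient \(\mathbb{E}(\varepsilon_r-\varepsilon_v)^2\ge 0\)), does not say that the variances are those of the noise terms, and does not treat the degenerate case where \(\lambda^\star\) is undefined.
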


\begin{proof}
The expression for the mean-squared error follows by expanding \(H_\lambda-I=\lambda\varepsilon_r+(1-\lambda)\varepsilon_v\). Differentiating the resulting quadratic with respect to \(\lambda\) gives the unconstrained minimizer. Since the feasible set \([0,1]\) includes both endpoints, the constrained optimum cannot be worse than selecting either \(R\) or \(V\) alone.
\end{proof}

\paragraph{Implication.}
This result relies on idealized unbiased-noise assumptions and should be interpreted as motivation rather than a guarantee. Rendering residuals provide task-specific error feedback, while variation cues provide a more stable local-structure prior. Their combination can reduce target noise when the two error sources are not perfectly correlated.

\subsection{Summary of Theoretical Support}

The results above support the design of BudSplat in four ways. First, residual token decomposition is motivated by an idealized low-rank projection view, where residual components capture information outside common token subspaces. Second, the fusion softmax provably increases the relative influence of contributors with higher detail or support priors and preserves above-average feature responses in relevant directions. Third, quality-aware pruning is the exact optimizer of a separable fixed-budget surrogate, and the support term acts as a principled redundancy correction. Fourth, patch-level quota redistribution and hybrid detail supervision are motivated by standard resource-allocation and variance-reduction arguments.

These statements do not prove that BudSplat must outperform baselines on any dataset. They show that the proposed modules are internally consistent with well-defined surrogate objectives, while the final quality-efficiency gains are established empirically.

\begin{table}[t]
\centering
\caption{
Additional main quantitative comparison 20, 24, 48 input views.
We report PSNR, SSIM, LPIPS, and the number of rendered Gaussians (\#GS).
For our method, Ours$_{\rho}$ denotes retaining a \(\rho\)-fraction of fused Gaussians after pruning.
Pink and orange cells indicate the best and second-best results under each dataset and view setting, following the metric direction indicated by the arrows.
OOM denotes out-of-memory during inference.
}
\label{tab:main_quantitative_extra}
\setlength{\tabcolsep}{3pt}
\renewcommand{\arraystretch}{0.95}
\scriptsize
\resizebox{\linewidth}{!}{
\begin{tabular}{ll|cccc|cccc|cccc}
\toprule
\multirow{2}{*}{Dataset} &
\multirow{2}{*}{Method} &
\multicolumn{4}{c|}{20 Views} &
\multicolumn{4}{c|}{24 Views} &
\multicolumn{4}{c}{48 Views} \\
\cmidrule(lr){3-6}
\cmidrule(lr){7-10}
\cmidrule(lr){11-14}
&
& PSNR$\uparrow$ & SSIM$\uparrow$ & LPIPS$\downarrow$ & \#GS$\downarrow$
& PSNR$\uparrow$ & SSIM$\uparrow$ & LPIPS$\downarrow$ & \#GS$\downarrow$
& PSNR$\uparrow$ & SSIM$\uparrow$ & LPIPS$\downarrow$ & \#GS$\downarrow$ \\
\midrule

\multirow{6}{*}{RE10K}
& VolSplat (posed)    & 23.17 & 0.824 & 0.221 & 2007K & 22.94 & 0.814 & 0.231 & 2408K & OOM & OOM & OOM & OOM \\
& GGN         & 19.43 & 0.735 & 0.278 & 444K & 19.33 & 0.732 & 0.280 & 492K & OOM & OOM & OOM & OOM \\
& AnySplat    & 22.21 & 0.737 & 0.178 & 1346K   & 22.35 & 0.745 & 0.178 & 1550K & 23.07 & 0.769 & 0.167 & 2574K\\
& WorldMirror & 22.15 & 0.740 & 0.194 & 1340K & 22.72 & 0.773 & 0.192 & 1453K & 22.93 & 0.790 & 0.187 & 2345K \\
\cmidrule(lr){2-14}
& Ours$_{\rho=5\%}$  & 22.71 & 0.775 & 0.208 & 67K & 23.02 & 0.791 & 0.201 & 78K & 23.23 & 0.797 & 0.200 & 128K \\
& Ours$_{\rho=40\%}$ & 23.29 & 0.810 & 0.185 & 538K & 23.45 & 0.816 & 0.183 & 620K & 23.74 & 0.837 & 0.171 & 1029K \\
\midrule

\multirow{6}{*}{DL3DV}
& VolSplat (posed)    & 12.35 & 0.255 & 0.665 & 2007K & 12.27 & 0.255 & 0.675 & 2408K & OOM & OOM & OOM & OOM \\
& GGN         & 12.36 & 0.313 & 0.629 & 854K & 12.14 & 0.303 & 0.628 & 1005K & OOM & OOM & OOM & OOM \\
& AnySplat    & 13.45 & 0.271 & 0.573 & 1700K & 13.63 & 0.274 & 0.571 & 2021K & 14.25 & 0.297 & 0.570 & 3792K \\
& WorldMirror & 18.39 & 0.585 & 0.289 & 1651K & 18.57 & 0.594 & 0.280 & 1949K & 19.22 & 0.614 & 0.242 & 3565K \\
\cmidrule(lr){2-14}
& Ours$_{\rho=5\%}$  & 17.47 & 0.475 & 0.320 & 85K & 17.64 & 0.482 & 0.316 & 101K & 18.43 & 0.508 & 0.296 & 189K \\
& Ours$_{\rho=40\%}$ & 18.38 & 0.571 & 0.292 & 680K & 18.66 & 0.579 & 0.281 & 808K & 19.03 & 0.604 & 0.254 & 1516K \\
\midrule

\multirow{6}{*}{ACID}
& VolSplat (posed)    & 24.57 & 0.773 & 0.224 & 2007K & 24.50 & 0.774 & 0.228 & 2408K & OOM & OOM & OOM & OOM \\
& GGN         & 22.20 & 0.683 & 0.301 & 440K & 22.08 & 0.680 & 0.304 & 488K & OOM & OOM & OOM & OOM \\
& AnySplat    & 22.17 & 0.652 & 0.237 & 1608K & 22.28 & 0.657 & 0.236 & 1876K & 22.47 & 0.672 & 0.233 & 3279K \\
& WorldMirror & 22.76 & 0.688 & 0.250 & 1500K & 22.77 & 0.692 & 0.251 & 1741K & 23.57 & 0.726 & 0.231 & 2982K \\
\cmidrule(lr){2-14}
& Ours$_{\rho=5\%}$  & 22.98 & 0.670 & 0.261 & 79K & 23.04 & 0.673 & 0.251 & 93K & 23.15 & 0.684 & 0.246 & 164K \\
& Ours$_{\rho=40\%}$ & 23.35 & 0.758 & 0.201 & 643K & 23.40 & 0.795 & 0.200 & 750K & 23.75 & 0.814 & 0.196 & 1311K \\
\bottomrule
\end{tabular}
}
\end{table}

\section{Additional main experiment results}
In this section, we provide more experiment results under our main experiment settings. Table \ref{tab:main_quantitative_extra} shows extra quantitative results of our main experiments under 20, 24, and 48 input views. For more visualization results, please refer to the anonymous link here: https://budsplat.pages.dev/

\section{Visualization of Gaussian Density Allocation}
\label{app:density_visualization}

To better understand how BudSplat allocates the Gaussian budget, we visualize both the target-view projected Gaussian density and the corresponding RGB rendering before and after pruning. Directly plotting retained and discarded Gaussians in 3D is often difficult to interpret, since large-view-count feed-forward reconstruction can produce millions of primitives. We therefore use a target-view density rendering diagnostic. For a fixed target camera, all visible Gaussians are rendered with the same small opacity and the same color, so pixels covered by more Gaussians receive larger accumulated values. This diagnostic approximates the projected Gaussian density in the target view. It is used only for analysis; normal RGB rendering still uses the original Gaussian opacity, color, covariance, and the standard rendering pipeline.

Figure~\ref{fig:density_visualization} shows the result. The top row visualizes the projected density before pruning, after pruning, the removed density, and the retention ratio. Let \(D_{\mathrm{pre}}\) and \(D_{\mathrm{post}}\) denote the density maps before and after pruning. We compute the removed density as \(\max(D_{\mathrm{pre}}-D_{\mathrm{post}},0)\), and the retention ratio as \(D_{\mathrm{post}}/(D_{\mathrm{pre}}+\epsilon)\). The removed-density map highlights where the pruned Gaussians originally contributed, while the retention-ratio map indicates which regions preserve more projected Gaussian support.

\begin{figure}[t]
    \centering
    \includegraphics[width=0.95\linewidth]{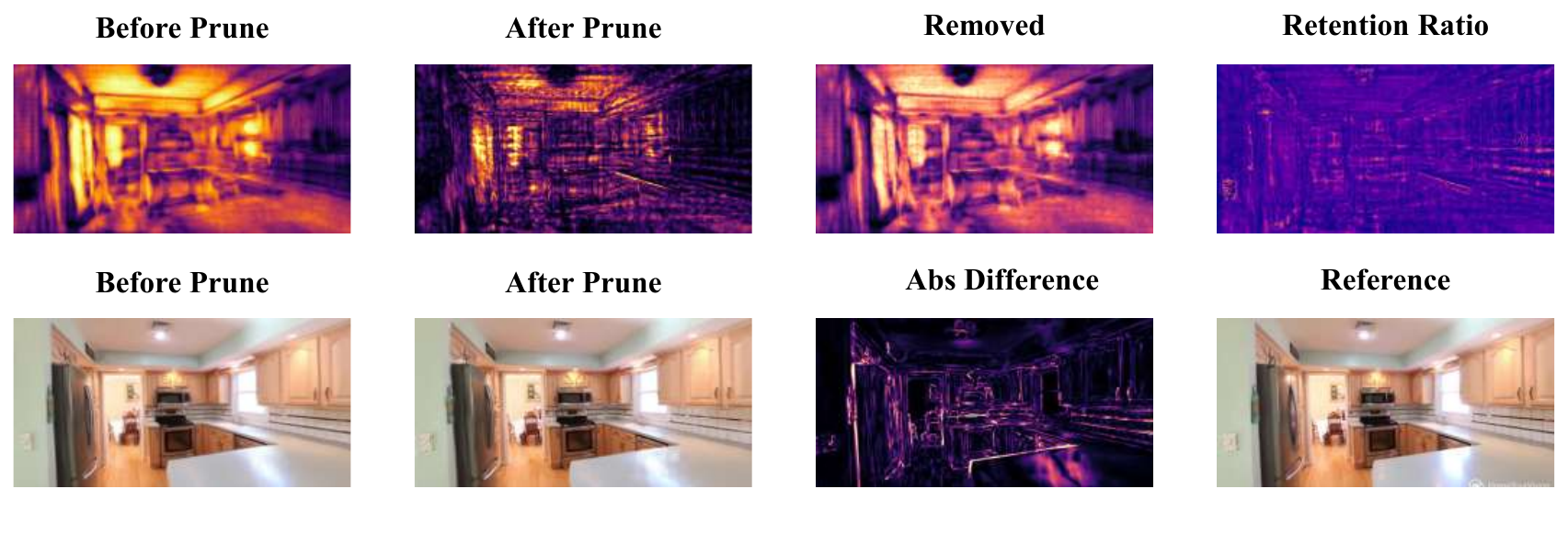}
    \caption{
    Target-view visualization of Gaussian density allocation and RGB rendering before and after pruning.
    Top row: projected Gaussian density before pruning, after pruning, removed density, and retention ratio.
    Bottom row: normal RGB rendering before pruning, after pruning, absolute difference, and reference image.
    The density diagnostic shows that pruning mainly removes redundant Gaussian support from low-frequency and highly overlapped regions, while preserving more support around high-frequency structures.
    The RGB difference map further shows that the remaining rendering changes are concentrated around edges, boundaries, and fine details rather than smooth regions.
    }
    \label{fig:density_visualization}
\end{figure}

The density maps show that BudSplat does not remove Gaussians uniformly across the image. Before pruning, broad low-frequency regions contain high projected density, which is expected because these regions are repeatedly observed by many input views and can generate many overlapping Gaussians. After pruning, the density becomes more concentrated around visually informative structures. The removed-density map indicates that many discarded Gaussians come from smooth or highly overlapped areas, where multiple primitives provide similar low-frequency information. In contrast, the retention-ratio map shows stronger preservation around edges, object boundaries, and texture-rich regions, suggesting that the compact budget is preferentially allocated to locations that are more likely to affect novel-view quality.

The bottom row provides an additional sanity check using normal RGB rendering. Unlike the density diagnostic, these RGB images are rendered with the original Gaussian attributes and the standard renderer. The before-pruning and after-pruning renderings remain visually close, indicating that a large fraction of the removed Gaussian density is redundant for final image formation. At the same time, the absolute difference map is not uniformly distributed over the image. Most differences appear around high-frequency details, such as edges, boundaries, thin structures, and texture transitions. This is expected: smooth regions can tolerate stronger Gaussian reduction because nearby primitives often provide similar color and geometry support, whereas small changes around high-frequency structures are more visible in RGB space.

This visualization is consistent with the design of BudSplat. The detail-aware token branch estimates where primitive removal is more likely to affect perceptual quality. Importance-preserving voxel fusion propagates this signal before aggregation, reducing the chance that rare but important local contributors are diluted by many redundant smooth-region observations. Quality-aware pruning and local quota redistribution then reduce unnecessary density in easy regions while retaining more support near detail-sensitive structures. Therefore, pruning behaves as budget reallocation rather than uniform Gaussian deletion: it removes many redundant primitives from low-frequency regions while preserving the primitives that are more important for rendering local details.

This analysis is qualitative and target-view dependent, but it provides an intuitive explanation for the quality-efficiency trade-off observed in the main experiments. BudSplat can substantially reduce the number of rendered Gaussians without causing large global changes in RGB rendering. The remaining differences are mainly concentrated in high-frequency regions, which are precisely the regions where compact Gaussian allocation is most challenging and where detail-aware budget control is most needed.

\section{Implementation and Analysis Details}
\label{app:details}

\paragraph{Hybrid target construction.}
For source view \(v\), we render the ungated reconstruction back to the source camera to obtain \(\hat{\mathbf I}_v\). For three-channel RGB vectors, the channel-averaged absolute residual is
\[
 e_v^{\mathrm{res}}(u)=\frac{1}{3}
 \left\|\mathbf I_v(u)-\hat{\mathbf I}_v(u)\right\|_1.
\]
The photometric and geometric variation cues are
\[
 e_v^{\mathrm{photo}}(u)=\|\nabla\mathbf I_v(u)\|_2,
 \qquad
 e_v^{\mathrm{geo}}(u)=\|\nabla z_v(u)\|_2+
 \xi\|\nabla\mathbf n_v(u)\|_2,
\]
where \(z_v\) and \(\mathbf n_v\) are the depth and normal fields, and invalid geometry is masked. Each cue is normalized per view and average-pooled to the VGGT patch lattice:
\[
 y_{v,p}^{\mathrm{res}}
 =\operatorname{Pool}_p\!\left(\operatorname{Norm}(e_v^{\mathrm{res}})\right),
 \qquad
 y_{v,p}^{\mathrm{var}}
 =\operatorname{Pool}_p\!\left(
 \frac{\operatorname{Norm}(e_v^{\mathrm{photo}})+
       \operatorname{Norm}(e_v^{\mathrm{geo}})}{2}\right).
\]
We combine these patch-level cues as \(y_{v,p}=\lambda_{\mathrm{res}}y_{v,p}^{\mathrm{res}}+\lambda_{\mathrm{var}}y_{v,p}^{\mathrm{var}}\) and stop gradients through the target before applying the BCE objective defined in the main text. The two sources of supervision play complementary roles: rendering residuals adapt to the current reconstruction, while RGB, depth, and normal variations provide structural cues even early in training. The score head thus learns to relate low-rank residual features to these cues, instead of using residual magnitude itself as the importance target.

\paragraph{Inherited and added components.}
BudSplat retains the AnySplat-style reconstruction pipeline: the pretrained VGGT geometry backbone, dense point/confidence and image-aligned feature prediction, spatial voxel grouping, Gaussian parameter decoding, and rasterizer. Our additions operate along this existing flow. The residual-token branch supplies detail scores before fusion, detail-aware weights preserve their influence inside the inherited voxels, and the learned gate, quality-aware ranking, and patch-level quotas produce the budgeted representation. Introducing importance before aggregation is essential to this design because post-hoc ranking alone cannot recover a local contributor that has already been diluted during fusion.

\paragraph{Fixed hyperparameters and scope.}
All reported datasets and view counts use \(\lambda_c=1.0\), \(\lambda_d=1.5\), \(\lambda_t=0.5\), \(\tau_f=1.0\), \(\eta=0.5\), \(\beta=1.0\), and \(\gamma=0.15\). We keep these coefficients fixed across scenes, datasets, view counts, trajectories, and keep ratios, so the reported comparisons use a common allocation rule. Within this rule, the pre-fusion contributor prior \(t_j\) supplies a weak local consistency cue, whereas the post-fusion count \(s_m=|\mathcal V_m|\) supplies a mild logarithmic correction for support already merged into one Gaussian. These stage-specific roles explain why a highly supported boundary can still be retained through its opacity and detail score, rather than being rejected solely on support. The evidence is limited to the evaluated distributions since a dedicated sparse 360-degree benchmark has not been tested. Fusion also remains dependent on the predicted geometry, since weighted averaging cannot completely separate distinct surfaces assigned to the same voxel.

\paragraph{Patch allocation and distributed training.}
After 3D voxel fusion, each Gaussian is assigned to one allocation bin through the source-view/source-patch identity of its dominant contributor. This assignment partitions an already-fused representation and it does not instantiate a separate Gaussian for every overlapping input patch. If bin \(b\) contains \(N_b\) candidates and receives integer quota \(K_b\), the allocation satisfies \(0\leq K_b\leq N_b\) and \(\sum_bK_b=K\), with the same score \(r_m\) used for selection inside each bin. The resulting keep mask enforces the budget, while the soft gate provides the training path described in the main text. During distributed training, each GPU handles one scene and its local budget before forming a scalar loss. Data parallelism then synchronizes fixed-shape network-parameter gradients, so differing Gaussian counts do not require gathering variable-length scene representations. Hard ranking and index selection are not differentiated. Gate learning uses the stated straight-through mask.

\paragraph{Training memory and efficiency scope.}
To locate the remaining memory bottleneck, we measure synchronized peak allocated memory for a complete training step and for a VGGT-only forward-backward pass with the same inputs and mixed precision. At 24 views, these peaks are approximately 42\,GB and 33\,GB per GPU, respectively; the backbone-only peak is about 79\% of the full-pipeline peak. Because the peaks are measured separately, their 9\,GB difference is not an additive estimate of BudSplat's new modules: it also includes Gaussian decoding, voxel grouping, rasterization, and backward buffers. These measurements place the timing results in context. BudSplat leaves the VGGT memory bottleneck in place, but the representation produced after this one-off computation is compact and can be reused for many novel-view renderings.


\end{document}